\title{Dual Decomposition from the Perspective of\\ Relax, Compensate
  and then Recover}
\author{
Arthur Choi \\
Computer Science Department \\
University of California, Los Angeles\\
Los Angeles, CA 90095 \\
\texttt{aychoi@cs.ucla.edu} \\
\And
Adnan Darwiche \\
Computer Science Department \\
University of California, Los Angeles\\
Los Angeles, CA 90095 \\
\texttt{darwiche@cs.ucla.edu} \\
}
\begin{document}

\maketitle

\begin{abstract}
Relax, Compensate and then Recover (RCR) is a paradigm for approximate
inference in probabilistic graphical models that has previously
provided theoretical and practical insights on iterative belief
propagation and some of its generalizations.  In this paper, we
characterize the technique of dual decomposition in the terms of RCR,
viewing it as a specific way to compensate for relaxed equivalence
constraints.  Among other insights gathered from this perspective, we
propose novel heuristics for recovering relaxed equivalence
constraints with the goal of incrementally tightening dual
decomposition approximations, all the way to reaching exact
solutions. We also show empirically that recovering equivalence
constraints can sometimes tighten the corresponding approximation (and
obtaining exact results), without increasing much the complexity of
inference.
\end{abstract}

\section{Introduction}

Relax, Compensate and then Recover (RCR) is a paradigm for approximate
inference that is based on performing three steps~\cite{ChoiDarwiche11}.  
First, one relaxes equivalence constraints in a given model to obtain a simplified model
that is tractable for exact inference.  Second, one compensates for the relaxed equivalences 
by enforcing a weaker notion of equivalence.  
Finally, by recovering equivalence constraints in a selective way, one can incrementally obtain increasingly
accurate approximations, all the way to exact solutions. This paradigm is flexible enough to
characterize existing algorithms for approximate inference, such as
iterative belief propagation (IBP)~\cite{pearl88b,ChoiDarwiche06a,ChoiDarwiche08c}.  
Moreover, a system based on RCR was also successfully employed in the UAI 2010
evaluation of approximate inference, where it was the leading system
in two of the most time-constrained categories evaluated~\cite{UAIEvaluation10}.  

Dual decomposition is a popular and effective approach for approximating MPE
problems in probabilistic graphical models~\cite{JohnsonMW07,KomodakisPT07,SontagGJ11}.\footnote{MPE refers
to the problem finding a {\em complete instantiation} of a graphical model with
maximal probability. This is commonly referred to as MAP as well. However,
many authors reserve MAP to the problem of finding a {\em partial instantiation}
with a maximal probability, which is a much more difficult task computationally 
than MPE. We observe this distinction between MPE and MAP in this paper.}
This technique has a number of desirable properties.  For example, it provides an upper bound on the
original MPE problem, which in some cases, can be tight.  Moreover,
algorithms for solving the corresponding dual optimization problem
have desirable theoretical properties, such as monotonic improvements as in block coordinate descent algorithms.  

In this paper, we formulate dual decomposition as an instance of RCR.
In particular, we view dual decomposition as a particular way of
restoring a weaker notion of equivalence when one relaxes an
equivalence constraint.  From the viewpoint of RCR, this
perspective gives rise to a new family of compensations with
distinctive properties, such as upper bounds on MPE problems, but also
upper bounds on the partition function.  From the viewpoint of
dual decomposition, this perspective 
(a) gives rise to a new approach to tightening upper bounds, based on new heuristics for recovering
equivalence constraints;
(b) expands the reach of dual decomposition by allowing its application to other inference tasks beyond MPE;
and 
(c) positions dual decomposition to capitalize on the vast literature on exact inference in addition to its
classical capitalization on the optimization literature.

Empirically, we show that the recovery of equivalence
constraints in our RCR formulation of dual decomposition can incrementally and effectively tighten the upper
bounds of dual decomposition, leading to optimal solutions in
some cases while recovering only a few equivalence constraints, and without increasing much the complexity
of inference.

\section{Dual Decomposition} \label{sec:dual}

\def\eqv{{eq}}
\def\lm{{\theta}}

We first illustrate the technique of dual decomposition using a concrete example, deferring the
reader to references such as~\cite{SontagGJ11} for a more general treatment.

Consider the MRF \(\psi(A,B,C) = \psi_1(A,B) \psi_2(B,C) \psi_3(A,C)\), where the goal is to find an instantiation
\(a, b, c\) of variables \(A, B, C\) that maximizes \(\psi(a,b,c)\). We refer to this as the {\bf MPE problem.}  
We also refer to \(\max_{a,b,c} \psi(a,b,c)\) as the {\bf MPE value}  and to the maximizing instantiation \(a, b, c\) as an {\bf MPE instantiation.} 
Finally, an MRF induces the probability distribution \(\pr(A,B,C) = \frac{1}{Z} \psi(A,B,C)\), where
we refer to \(Z = \sum_{a,b,c} \psi(a,b,c)\) as the {\bf partition function.}

Dual decomposition is a technique for approximating the MPE problem, which can be described concretely as follows.
We first clone the occurrence of each variable in each factor, leading to auxiliary variables 
\(A_1, B_1\) and \(B_2, C_2\) and \(A_3, C_3\). We now have the {\em fully decomposed} MRF:
\begin{multline*}
\psi(A,B,C,A_1,B_1,B_2,C_2,A_3,C_3) = 
\psi_1(A_1,B_1) \: \psi_2(B_2,C_2) \: \psi_3(A_3,C_3) \\
\eqv(A,A_1) \: \eqv(A,A_3) \: \eqv(B,B_1) \: \eqv(B,B_2) \: \eqv(C,C_2) \: \eqv(C,C_3),
\end{multline*}
where \(\eqv(X,X_i)\) is an equivalence constraint. That is, \(\eqv(x,x_i)=1\) when \(x=x_i\) and \(\eqv(x,x_i)=0\) when \(x \neq x_i\).
Note that \(\psi(a,b,c,a_1,b_1,b_2,c_2,a_3,c_3) = \psi(a,b,c)\) when \(a=a_1=a_3\), \(b=b_1=b_2\) and \(c=c_2 = c_3\);
otherwise, \(\psi(a,b,c,a_1,b_1,b_2,c_2,a_3,c_3) = 0\). Hence,
\[ \max_{a,b,c} \psi(a,b,c) = \max_{a,b,c,a_1,b_1,b_2,c_2,a_3,c_3} \psi(a,b,c,a_1,b_1,b_2,c_2,a_3,c_3). \]
The original and fully decomposed MRFs are then equivalent as far as computing the MPE value.

We now relax the equivalence constraints (i.e., drop them), while replacing each constraint \(\eqv(X,X_i)\) by
\(\lm_j(X)/\lm_j(X_i)\) (which is equal to one when \(x=x_i\)), leading to:
\begin{multline*}
\psi(A,B,C,A_1,B_1,B_2,C_2,A_3,C_3) = \\
\psi_1(A_1,B_1) \: \psi_2(B_2,C_2) \: \psi_3(A_3,C_3) 
\frac{\lm_1(A)\lm_3(A)}{\lm_1(A_1)\lm_3(A_3)}\frac{\lm_1(B)\lm_2(B)}{\lm_1(B_1)\lm_2(B_2)}\frac{\lm_2(C)\lm_3(C)}{\lm_2(C_2)\lm_3(C_3)}.
\end{multline*}
Note that \(\psi(a,b,c,a_1,b_1,b_2,c_2,a_3,c_3) = \psi(a,b,c)\) when \(a=a_1=a_3\), \(b=b_1=b_2\) and \(c=c_2 = c_3\);
otherwise, \(\psi(a,b,c,a_1,b_1,b_2,c_2,a_3,c_3)\) is incomparable to \(\psi(a,b,c)\). Hence,
\begin{eqnarray*} 
\max_{a,b,c} \psi(a,b,c) 
& \leq & \max_{a,b,c,a_1,b_1,b_2,c_2,a_3,c_3} \psi(a,b,c,a_1,b_1,b_2,c_2,a_3,c_3) \\
& = & 
\left[\max_{a} \lm_1(a)\lm_3(a)\right]
\left[\max_{b} \lm_1(b)\lm_2(b)\right]
\left[\max_{c} \lm_2(c)\lm_3(c)\right] \\
&  & 
\left[\max_{a_1,b_1} \frac{\psi_1(a_1,b_1)}{\lm_1(a_1)\lm_1(b_1)}\right] 
\left[\max_{b_2,c_2} \frac{\psi_2(b_2,c_2)}{\lm_2(b_2)\lm_2(c_2)}\right] 
\left[\max_{a_3,c_3} \frac{\psi_3(a_3,c_3)}{\lm_3(a_3)\lm_3(c_3)}\right]
\end{eqnarray*}
This is called the {\em dual objective} and is guaranteed to provide an upper bound on the MPE value, \(\max_{a,b,c} \psi(a,b,c)\), 
regardless of the specific values chosen for multipliers \(\lm_j(x) > 0\). However, one can improve the upper bound by searching for 
multipliers \(\lm_i(x)\) that minimize the dual objective. 


Minimization problems such as this one can be tackled using techniques
from the optimization literature.  For example, subgradient methods
are applicable to objective functions that are not differentiable,
such as the one above.  They are also guaranteed to minimize the dual
objective to optimality, with appropriate choice of step sizes.  For
another example, block coordinate descent methods monotonically
decrease the dual objective at each step, and can yield faster
convergence rates than subgradient methods.  However, they are not
necessarily guaranteed to minimize the dual objective.  See
\cite{SontagGJ11} for a more thorough introduction to dual
decomposition, and algorithms for the dual optimization problem.

\shrink{
Viewing this optimization problem in terms of linear programming (LP)
relaxations \cite{GlobersonJ07,SontagGJ11}, we can tighten this upper
bound further by tightening the LP relaxation.  In \cite{SontagGJ11},
this takes the form of introducing new linear constraints that
correspond to local constraints on marginal consistency, which in turn
implies a tighter dual objective function.

\begin{figure}[tb]
\begin{center}
 \includegraphics[scale=.4,clip=true,angle=0]{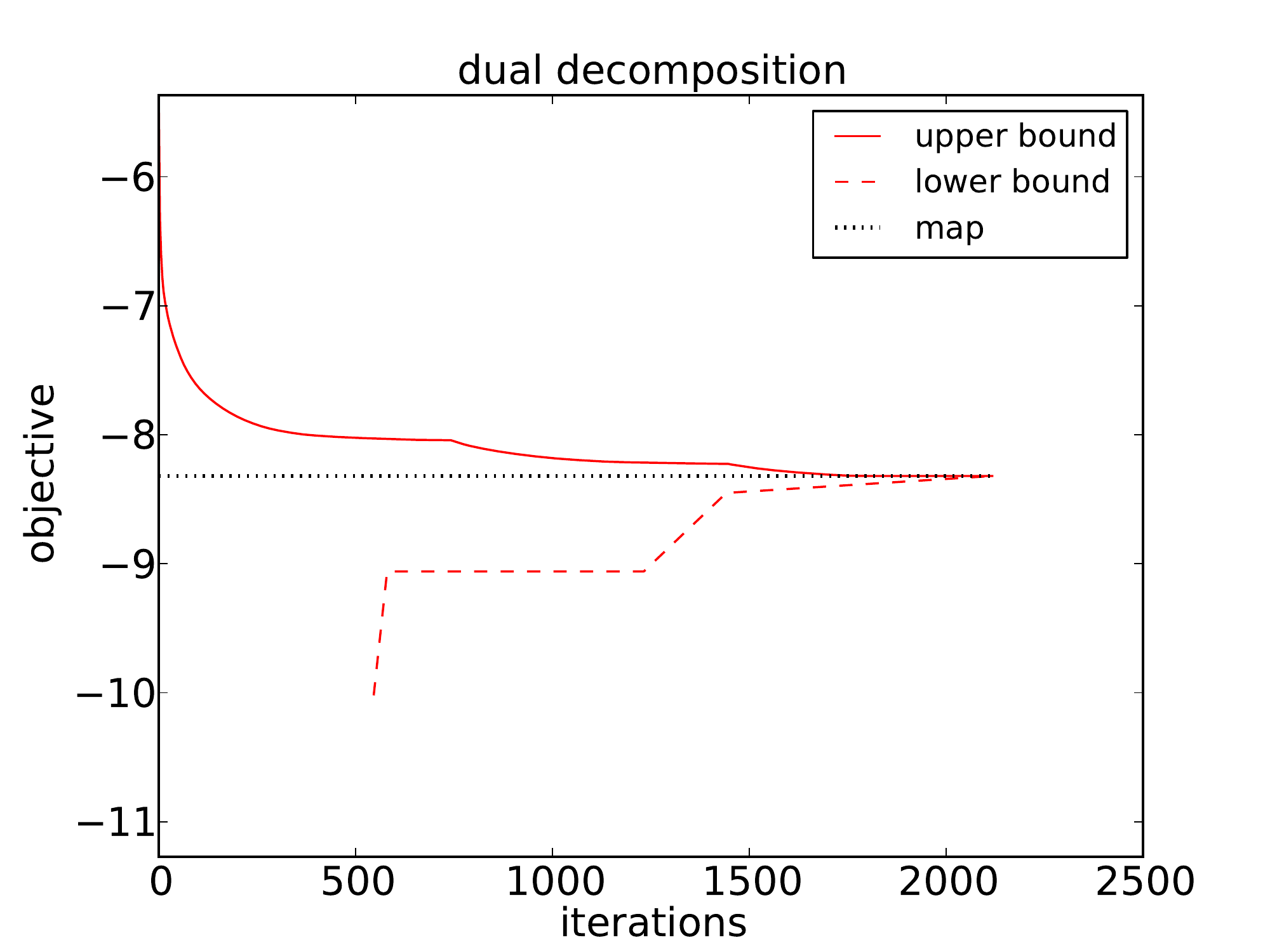}
 \caption{Tightening upper and lower bounds, as we iteratively
   improve our approximation (moving right on the \(x\)-axis).
   \label{fig:dual}}
\end{center}
\end{figure}

If one has an instantiation \(a,b,c,a_1,b_1,c_1,a_2,b_2,c_2\), it can
be evaluated in the original MRF, leading to a lower bound on the
MPE value (since no instantiation can have a value
greater than that of the MPE).  We can obtain such an instantiation,
for example, by decoding one from the solution of the LP relaxation.
By tightening our LP relaxation, we tighten the upper bound provided
by the dual objective.  We can also improve the quality of the
instantiation that we obtain from it, tightening our lower bound as
well.  When these two bounds meet, we then obtain the optimal value of
our MPE problem \cite{SontagGJ11}; see Figure~\ref{fig:dual}.
}

\section{Relax, Compensate, and then Recover} \label{sec:rcr}

RCR is an approximate inference framework, which is based on three steps. 
The first step relaxes equivalence constraints from the original model. The second step compensates
for the relaxed equivalences by enforcing some weaker notion of equivalence. The third step recovers back some of the equivalences
in an anytime fashion, with the goal of improving the approximation. The main computational work performed by RCR is in the compensation step, which
requires exact inference on the relaxed model (any exact inference algorithm can be used for this purpose). The recovery step
may also entail computational work, although this depends largely on the recovery heuristics (some heuristics can be computed
as a side effect of the compensation step, as we show later).

We will next illustrate the three steps of RCR using the same example discussed above. 
For a more general treatment of RCR, however, the reader is referred to~\cite{ChoiDarwiche11}.

\subsection{Relax}
The first step of RCR is similar to the one used by dual decomposition: We clone variables and introduce equivalence constraints, 
leading to the following model:
\begin{multline*}
\psi(A,B,C,A_1,B_1,B_2,C_2,A_3,C_3) = 
\psi_1(A_1,B_1) \: \psi_2(B_2,C_2) \: \psi_3(A_3,C_3) \\
\eqv(A,A_1) \: \eqv(A,A_3) \: \eqv(B,B_1) \: \eqv(B,B_2) \: \eqv(C,C_2) \: \eqv(C,C_3).
\end{multline*}
We can then relax an equivalence constraint by simply dropping it from the model. For example,
relaxing all equivalence constraints leads to the following model, which is {\bf fully decomposed:}
\[
\psi(A,B,C,A_1,B_1,B_2,C_2,A_3,C_3) = 
\psi_1(A_1,B_1) \: \psi_2(B_2,C_2) \: \psi_3(A_3,C_3).
\]

In principle, one can relax as many constraints as one wishes---normally, until the model is disconnected enough to be
feasible for exact inference. RCR, however, typically relaxes enough equivalence constraints to render the model
fully decomposed. It then recovers some of these constraints incrementally and selectively, until it runs out of time
or until the model becomes too connected to be feasible for exact inference. More on this later.

\subsection{Compensate}

Compensating for a relaxed equivalence constraint, say, \(\eqv(A,A_1)\),
is done by adding factors \(\lm_{A_1}(A)\) and \(\lm_{A}(A_1)\) in lieu of factor \(\eqv(A,A_1)\), leading to the {\bf compensated model:}
\begin{multline*}
\psi(A,B,C,A_1,B_1,B_2,C_2,A_3,C_3) = 
\psi_1(A_1,B_1) \: \psi_2(B_2,C_2) \: \psi_3(A_3,C_3) \\
\underline{\lm_{A_1}(A)\lm_{A}(A_1)} \: \eqv(A,A_3) \: \eqv(B,B_1) \: \eqv(B,B_2) \: \eqv(C,C_2) \: \eqv(C,C_3).
\end{multline*}
The added factors, \(\lm_{A_1}(A)\) and \(\lm_{A}(A_1)\), are sometimes called {\em compensation factors.}  Note that we shall omit the subscripts \(X_i\) and \(X\) when it is clear that factors \(\lm(X)\) and \(\lm(X_i)\) refer to the compensation factors for equivalence constraint \(\eqv(X,X_i)\).
Moreover, whenever we refer to a state \(x\) of variable \(X\),
we will denote the corresponding state of variable \(X_i\) by \(x_i\),
unless otherwise stated.

A {\em compensation scheme} is a set of conditions on the values of compensating factors. Each compensation
scheme leads to a class of approximations. In phrasing such conditions, we will write \(\mpe(a)\) 
to denote the {\em MPE marginal,} \(\max_{b,c} \psi(a,b,c)\). 
We will also write \(Z(a)\) to denote the {\em partition function marginal,} \(\sum_{b,c} \psi(a,b,c)\). 

The following is a common condition used by different RCR compensation schemes.
\begin{definition}
A compensation scheme for relaxed equivalence \(\eqv(X,X_i)\) satisfies {\bf pr-equivalence} iff the distribution
induced by the compensated model satisfies \(\pr(x) = \pr(x_i)\) for all values \(x\) and their corresponding values \(x_i\). 
Moreover, it satisfies {\bf mpe-equivalence} iff \(\mpe(x) = \mpe(x_i)\) for all values \(x\) and their corresponding values \(x_i\).
\end{definition}

A common and powerful technique for deriving further conditions on the compensation scheme is based on
considering a single relaxed equivalence, under some idealized situation, and finding out what that idealization 
implies. Suppose, for example, that relaxing the equivalence constraint \(\eqv(X,X_i)\) 
splits the model into two disconnected components, one containing variable \(X\) and another containing variable \(X_i\). 
This idealized situation implies the following condition, which is the only condition that leads 
to exact node marginals.

\begin{definition}
A compensation scheme for relaxed equivalence \(\eqv(X,X_i)\) satisfies {\bf model-split}  iff the distribution
induced by the compensated model satisfies pr-equivalence and 
\[
\pr(x) = \frac{\lm(x)\lm(x_i)}{\sum_{x} \lm(x)\lm(x_i)}.
\]
\end{definition}
On fully decomposed models, this compensation scheme leads to IBP approximations ~\cite{ChoiDarwiche06a,ChoiDarwiche09b}, and further the Bethe free energy approximation of the partition function~\cite{Yedidia05,ChoiDarwiche08c}.

\subsection{Finding compensations} \label{sec:find compensations}

The main computational work performed by RCR is in finding compensations that satisfy some stated conditions. This is usually
done by deriving a characterization of the compensation, which yields fixed-point iterative equations. 
For example, compensations that satisfy model-split have been characterized as follows \cite{ChoiDarwiche06a}.
\begin{theorem}\label{theo:edbp}
A compensation scheme for relaxed equivalence constraint \(\eqv(X,X_i)\) satisfies model-split iff the partition function \(Z\) of
the compensated model satisfies
\begin{align}
\theta(x) & = \alpha \frac{Z(x_i)}{\theta(x_i)} &
\theta(x_i) & = \alpha \frac{Z(x)}{\theta(x)}
\label{eq:edbp}
\end{align}
for all states \(x,\) and their corresponding states \(x_i\).
Here, \(\alpha \) is an arbitrary normalizing constant.
\end{theorem}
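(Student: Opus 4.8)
The plan is to reduce both the \emph{model-split} conditions and the fixed-point equations~\eqref{eq:edbp} to simple algebraic identities among the partition-function marginals $Z(x)$ and $Z(x_i)$, and then read off the equivalence directly. The two facts I would use throughout are $\pr(x) = Z(x)/Z$ and $\pr(x_i) = Z(x_i)/Z$, so that $\pr$ is just $Z$ renormalized, together with $\sum_x Z(x) = \sum_{x_i} Z(x_i) = Z$, which holds because $Z(x)$ and $Z(x_i)$ are marginals of the same compensated model. Since the compensation factors satisfy $\theta(x),\theta(x_i) > 0$, I may cross-multiply freely.

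First I would rewrite the fixed-point equations. Multiplying $\theta(x) = \alpha Z(x_i)/\theta(x_i)$ by $\theta(x_i)$, and $\theta(x_i) = \alpha Z(x)/\theta(x)$ by $\theta(x)$, shows that~\eqref{eq:edbp} is equivalent to the single pair
\[
\theta(x)\theta(x_i) = \alpha Z(x_i), \qquad \theta(x)\theta(x_i) = \alpha Z(x),
\]
holding for all corresponding states $x, x_i$. Equating the two right-hand sides immediately gives $Z(x) = Z(x_i)$, hence $\pr(x) = \pr(x_i)$, which is exactly \emph{pr-equivalence}. Conversely, \emph{pr-equivalence} supplies $Z(x) = Z(x_i)$ and collapses the pair back to the single relation $\theta(x)\theta(x_i) = \alpha Z(x)$.

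It then remains to show that, under \emph{pr-equivalence}, the relation $\theta(x)\theta(x_i) = \alpha Z(x)$ is equivalent to the second \emph{model-split} condition $\pr(x) = \theta(x)\theta(x_i)/\sum_x \theta(x)\theta(x_i)$. Here the only real care is in tracking the normalizing constant: summing $\theta(x)\theta(x_i) = \alpha Z(x)$ over $x$ and using $\sum_x Z(x) = Z$ forces $\alpha = \bigl(\sum_x \theta(x)\theta(x_i)\bigr)/Z$, after which dividing $\theta(x)\theta(x_i) = \alpha Z(x)$ by $\sum_x \theta(x)\theta(x_i)$ yields precisely $\theta(x)\theta(x_i)/\sum_x \theta(x)\theta(x_i) = Z(x)/Z = \pr(x)$. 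Running this computation backwards recovers $\theta(x)\theta(x_i) = \alpha Z(x)$ with the same $\alpha$, and reinstating $Z(x) = Z(x_i)$ restores both fixed-point equations. The main thing to get right is therefore bookkeeping rather than any analytic difficulty: verifying that a single constant $\alpha$ serves both equations in~\eqref{eq:edbp} and that it coincides with the normalizer implicit in the \emph{model-split} condition. Once everything is phrased in terms of $Z(x)$ and $Z(x_i)$, the equivalence is a short chain of cancellations.
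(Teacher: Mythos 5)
Your proof is correct, but it cannot be compared line\-by\-line with the paper's, because the paper provides no argument for this theorem at all: its appendix proof is literally ``See~\cite{ChoiDarwiche06a}'', deferring to the original ED\-BP characterization derived in that reference (in the language of deleting edges in Bayesian networks and belief\-propagation fixed points). Your self\-contained algebraic route is therefore genuinely different from what the paper does, and it is sound: the identities $\pr(x)=Z(x)/Z$, $\pr(x_i)=Z(x_i)/Z$ and $\sum_x Z(x)=Z$ reduce Equation~\ref{eq:edbp} to the pair $\theta(x)\theta(x_i)=\alpha Z(x)=\alpha Z(x_i)$, which is exactly pr\-equivalence plus the normalized\-product condition of model\-split, with a single constant $\alpha=\bigl(\sum_x\theta(x)\theta(x_i)\bigr)/Z$ serving both equations. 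One hypothesis deserves to be made explicit: you need $\alpha\neq 0$ (equivalently, strict positivity of the compensating factors, which you do assume, and which the paper assumes implicitly, cf.\ ``when $\theta(x)$ is positive'' in its proof of Theorem~\ref{thm:dd-update}) in order to conclude $Z(x)=Z(x_i)$ from equating the two right\-hand sides. It is also worth observing that your computation is the model\-split analogue of the paper's own appendix proof of Theorem~\ref{thm:dd-update}: that proof performs the same manipulation written in derivative form, using $\partial Z^\prime/\partial\theta(x)=Z^\prime(x)/\theta(x)$, with the upper\-bound substitution $\theta(x_i)=1/\theta(x)$ playing the role that your normalization constant plays here. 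What your version buys is an actual proof of Theorem~\ref{theo:edbp} within the paper, where the authors left it to an external citation.
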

This theorem identifies update equations which form the basis of an iterative fixed-point algorithm 
that searches for model-split compensations.\footnote{The required quantities correspond to partial derivatives, which can be computed efficiently in traditional frameworks 
for inference \cite{darwicheJACM-POLY,ParkAIJ04}.} 
In fact, the message-passing updates of IBP are precisely the fixed-point iterative updates implied by Equation~\ref{eq:edbp}~\cite{ChoiDarwiche06a}.

\subsection{Recover}

RCR typically relaxes enough equivalence constraints to yield a fully decomposed
model. It then recovers equivalence constraints incrementally and selectively,
until it runs out of time or the model becomes too connected to be feasible for exact
inference. The recovery process is based on a heuristic, called a {\em recovery heuristic,}
that tries to identify the constraints whose relaxation has been most damaging to
the quality of an approximation.

A number of recovery heuristics have been proposed previously. One of these
heuristics is based on mutual information~\cite{ChoiDarwiche06a} and is designed
for the use with the compensation scheme that satisfies model-split.  Another
heuristic was used by RCR at the UAI'10 approximate inference 
evaluation~\cite{UAIEvaluation10,ChoiDarwiche11}, 
which was critical to the performance (and success) of RCR in that evaluation.

Combining recovery, with compensations that satisfy model-split, yields approximations that correspond to iterative joingraph propagation
(IJGP) approximations~\cite{AjiMcEliece01,DechterKM02,ChoiDarwiche06a}.\footnote{Similar characterizations 
and generalizations of IBP have been shown in~\cite{MinkaPhd,WainwrightJW03,YedidiaFreemanWeiss03}.}

\section{A New Compensation Scheme: Dual Decomposition} 

We will now consider a new compensation scheme for RCR, which gives rise to dual decomposition approximations 
of Section~\ref{sec:dual} when the inference task of RCR is that of computing MPE.


We start with the following family of compensation schemes.
\begin{definition}
A compensation scheme for relaxed equivalence \(\eqv(X,X_i)\) satisfies {\bf upper-bound} iff
\begin{equation}
\theta(x)\theta(x_i) = 1, \mbox{  for all values \(x\) and their corresponding values \(x_i\)}.
\label{eq:cs-upper-bound}
\end{equation}
\end{definition}
The above condition leads to the following interesting guarantee.
\begin{theorem} \label{thm:bound}
A compensation scheme that satisfies upper-bound leads to a compensated model whose partition function
is an upper bound on the exact partition function, and whose MPE value is an upper bound on the exact MPE value.
\end{theorem}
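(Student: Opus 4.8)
The plan is to prove both bounds simultaneously by a single pointwise domination argument. I would compare the compensated model against a \emph{reference model} obtained by restoring every relaxed equivalence constraint: this reference model is just the fully decomposed MRF together with all of its equivalence constraints, and as observed in Section~\ref{sec:dual} its potential equals $\psi(a,b,c)$ on every instantiation in which the clones agree with their originals, and equals $0$ otherwise. Consequently, summing its potential over all instantiations (clones included) recovers the exact partition function $Z$, and maximizing it recovers the exact MPE value. The compensated model differs from this reference model only in that each relaxed factor $\eqv(X,X_i)$ is replaced by the product of compensation factors $\theta(X)\theta(X_i)$, while every other factor is left untouched.

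The crux is the per-constraint inequality $\theta(x)\theta(x_i) \geq \eqv(x,x_i)$, which I would verify by a two-case split over joint states of $X$ and $X_i$. When $x$ and $x_i$ are corresponding states, so that $\eqv(x,x_i)=1$, the upper-bound condition $\theta(x)\theta(x_i)=1$ makes this an equality. When they are not corresponding, so that $\eqv(x,x_i)=0$, the inequality holds simply because $\theta(x)\theta(x_i) > 0$, using the standing assumption that the multipliers are strictly positive. Thus each relaxed factor of the compensated model pointwise dominates the matching equivalence constraint of the reference model.

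Next I would lift this to the full joint potential. Since all remaining factors --- the $\psi_j$ and any equivalence constraints that were not relaxed --- are shared between the two models and are nonnegative, I can multiply the per-constraint inequalities through to conclude that the compensated potential dominates the reference potential at \emph{every} complete instantiation of all variables. Summing this pointwise inequality shows the compensated partition function is at least the reference partition function, which equals $Z$; taking the maximum in place of the sum shows the compensated MPE value is at least the exact MPE value. This yields exactly the two claimed upper bounds.

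The step needing the most care is the case analysis underlying $\theta(x)\theta(x_i) \geq \eqv(x,x_i)$. One must notice that the upper-bound condition is imposed only on corresponding states and does nothing for non-corresponding states; there the inequality is powered purely by strict positivity of the multipliers, and it is precisely this slack (from $0$ up to a positive value) that makes the conclusion an inequality rather than the exact identity of the equivalence-restoring model. I would also take care to state explicitly that the reference model's sum and max reduce to the original $Z$ and MPE value, since that reduction is the same pointwise identity already used in Section~\ref{sec:dual} and is what anchors the bounds to the exact quantities.
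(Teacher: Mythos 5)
Your proposal is correct and follows essentially the same argument as the paper's proof: a per-constraint domination $\eqv(x,x_i) \le \theta(x)\theta(x_i)$ (equality $=1$ on corresponding states, $0 \le$ product otherwise), lifted to a pointwise domination of the constraint-augmented model's potential by the compensated potential, then summed for $Z$ and maximized for MPE. Your ``reference model'' is exactly the paper's MRF with equivalence constraints, and the only cosmetic difference is that the paper needs just nonnegativity of the compensating factors where you invoke strict positivity.
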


Combining the upper-bound condition with pr/mpe-equivalence  leads to a compensation scheme
that characterizes and generalizes dual decomposition approximations, as we show next.

\begin{definition}\label{def:cs-dd}
A compensation scheme satisfies {\bf pr-dd} iff it satisfies upper-bound and pr-equivalence.
Moreover, it satisfies {\bf mpe-dd} iff it satisfies upper-bound and mpe-equivalence.
\end{definition}

The following theorem provides a characterization of the pr-dd and mpe-dd compensation schemes, which
can be used to search for compensations in fully decomposed models.

\begin{theorem} \label{thm:dd-update}
For a single equivalence constraint \(\eqv(X,X_i)\), a compensation
scheme satisfies pr-dd iff for all values \(x,\) and their
corresponding values \(x_i,\) the compensated model satisfies
\begin{equation} \label{eq:dd-update}
\theta(x) 
= \left( 
\frac{Z(x_i)/\theta(x_i)}
     {Z(x)/\theta(x)}
\right)^{\frac{1}{2}}
\quad\quad
\theta(x_i) 
= \left( 
\frac{Z(x)/\theta(x)}
     {Z(x_i)/\theta(x_i)}
\right)^{\frac{1}{2}}
\end{equation}
The scheme satisfies mpe-dd iff it satisfies the above condition with \(\mpe(.)\) substituted for \(Z(.)\).
\end{theorem}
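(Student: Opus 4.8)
The plan is to treat the characterization as a purely algebraic equivalence between, on one hand, the two defining conditions of pr-dd (upper-bound and pr-equivalence) and, on the other hand, the two update equations of~(\ref{eq:dd-update}), exploiting that all quantities \(\theta(x),\theta(x_i),Z(x),Z(x_i)\) are strictly positive. The first preliminary step is to restate pr-equivalence in terms of partition function marginals: since the compensated model has a single normalizer \(Z\) with \(\pr(x)=Z(x)/Z\) and \(\pr(x_i)=Z(x_i)/Z\), the condition \(\pr(x)=\pr(x_i)\) is equivalent to \(Z(x)=Z(x_i)\). I would also record that each equation in~(\ref{eq:dd-update}), relating only positive quantities, is equivalent to its squared and cross-multiplied form; clearing denominators turns the equation for \(\theta(x)\) into \(\theta(x)\theta(x_i)Z(x)=Z(x_i)\) and the equation for \(\theta(x_i)\) into \(\theta(x)\theta(x_i)Z(x_i)=Z(x)\).

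For the reverse direction I would assume the two cleared equations hold and set \(p=\theta(x)\theta(x_i)>0\), so that \(pZ(x)=Z(x_i)\) and \(pZ(x_i)=Z(x)\). Substituting one into the other yields \(p^2 Z(x)=Z(x)\), and positivity of \(Z(x)\) forces \(p=1\), which is exactly the upper-bound condition; feeding \(p=1\) back gives \(Z(x)=Z(x_i)\), i.e.\ pr-equivalence. For the forward direction I would run this computation in reverse: assuming \(\theta(x)\theta(x_i)=1\) and \(Z(x)=Z(x_i)\), I evaluate the right-hand side of the first update equation, \((Z(x_i)/\theta(x_i))/(Z(x)/\theta(x))\), which equals \(\theta(x)/\theta(x_i)\) by \(Z(x)=Z(x_i)\) and then \(\theta(x)^2\) by \(\theta(x_i)=1/\theta(x)\); taking the unique positive square root recovers the first equation of~(\ref{eq:dd-update}), and the second follows symmetrically. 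Since every manipulation is reversible under positivity, this establishes the iff for a single state pair \(x,x_i\), and because both sides are quantified identically over all corresponding states, the full equivalence follows.

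The mpe-dd claim I would handle by the identical computation with \(\mpe(\cdot)\) in place of \(Z(\cdot)\): once \(x\) is fixed, the compensation factor \(\theta(x)\) enters \(\mpe(x)\) as a constant multiplier, so the same product/ratio structure is present, and the only change is that mpe-equivalence \(\mpe(x)=\mpe(x_i)\) already plays the role that \(Z(x)=Z(x_i)\) did, with no normalizing step needed since the MPE marginals are compared directly rather than through a common denominator.

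The main thing to get right is not any single hard step but the bookkeeping of reversibility: I must verify that passing to the squared and cross-multiplied form loses no solutions and introduces no spurious ones, which is precisely where strict positivity of the multipliers \(\theta\) and of the marginals is used. A secondary point worth flagging in the writeup is conceptual rather than technical: the update~(\ref{eq:dd-update}) automatically satisfies \(\theta(x)\theta(x_i)=1\), so the upper-bound constraint is built into the fixed-point map for free; this is the observation that makes the characterization usable as an iterative scheme, and is worth stating explicitly even though it is immediate from the algebra.
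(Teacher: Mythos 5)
Your proposal is correct and takes essentially the same route as the paper's proof: both reduce pr-equivalence to equality of the partition-function marginals, substitute the upper-bound condition \(\theta(x_i)=1/\theta(x)\), and solve for \(\theta(x)\), with the mpe case handled by the same algebra. The only differences are presentational: the paper writes the marginals as partial derivatives \(\partial Z^\prime/\partial\theta(x) = Z^\prime(x)/\theta(x)\) (the observation that makes the equation usable as a fixed-point update, which you relegate to a closing remark) and leaves the reverse direction of the iff largely implicit, while you spell it out via the cleared equations and the \(p^2=1\) argument.
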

There is one subtlety about the above theorem, in comparison to Theorem~\ref{theo:edbp}. The equation given
in this theorem can be used as an update equation only when variables \(X\) and \(X_i\) are independent in
the compensated model (otherwise, the left-hand side will depend on the right-hand side). When the compensated model
is fully decomposed, this condition is met (after taking into account the division of the compensating factors from the partition function marginals). More generally, when relaxing the equivalence constraint
\(\eqv(X,X_i)\) splits the model into two disconnected components, one containing \(X\) and the other containing \(X_i\),
the condition is also met.

In fully decomposed models, one can use the above update equation to search for compensations that satisfy pr-dd or mpe-dd,
in the same way that Equation~\ref{eq:edbp} can be used to search for compensations that satisfy
model-split (see Section~\ref{sec:find compensations}). We actually have a stronger result.

\begin{theorem} \label{thm:dd}
When the compensated model is fully decomposed, the fixed-point iterative updates of Equation~\ref{eq:dd-update}
correspond precisely to the block coordinate descent updates of the sum-product and max-sum diffusion algorithms,
respectively.
\end{theorem}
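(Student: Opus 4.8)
The plan is to establish the claimed correspondence by writing the dual decomposition objective in its multiplicative (product) form, isolating the two terms that depend on the compensation factors of a single relaxed constraint, and showing that the block-optimal choice of those factors is exactly the geometric-mean update of Equation~\ref{eq:dd-update}. Throughout, I identify the compensation factors of RCR with the dual multipliers of Section~\ref{sec:dual}: replacing a constraint by $\lm_j(X)/\lm_j(X_i)$ as in Section~\ref{sec:dual} corresponds to compensation factors $\theta(x)=\lm_j(x)$ and $\theta(x_i)=1/\lm_j(x_i)$, so that the upper-bound condition $\theta(x)\theta(x_i)=1$ of Equation~\ref{eq:cs-upper-bound} holds automatically for corresponding states. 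The block of free parameters for the constraint $\eqv(X,X_i)$ thus reduces to the single vector $\theta(x)$, with $\theta(x_i)=1/\theta(x)$.

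First I would write the partition-function upper bound of the compensated, fully decomposed model as a product of local terms---one term $\sum_x \prod_k \lm_k(x)$ for each original variable and one term $\sum_{x_f}\psi_f(x_f)/\prod\lm(x)$ for each cloned factor---exactly as in the multiplicative dual of Section~\ref{sec:dual} but with $\sum$ in place of $\max$; the max-sum case is obtained by replacing every $\sum$ with $\max$. Because the model is fully decomposed, the only two product-terms that depend on the block $\theta(x)$ are the local term for the original variable $X$ and the factor term containing its clone $X_i$, with every other term constant with respect to this block. Collecting the fixed contributions into messages, I would write these two terms as $\sum_x \theta(x)\,r(x)$ and $\sum_{x_i}\theta(x_i)\,s(x_i)$, and then verify the key identifications $r(x)\propto Z(x)/\theta(x)$ and $s(x_i)\propto Z(x_i)/\theta(x_i)$ up to a state-independent constant, which hold because $X$ and $X_i$ each appear in only a single non-constant neighborhood of the decomposed model (and analogously $r(x)\propto \mpe(x)/\theta(x)$, $s(x_i)\propto \mpe(x_i)/\theta(x_i)$ in the max-sum case).

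Next, substituting $\theta(x_i)=1/\theta(x)$, the block objective for sum-product diffusion becomes the product $\big(\sum_x \theta(x)\,r(x)\big)\big(\sum_x s(x)/\theta(x)\big)$, up to a constant factor. Minimizing this over each $\theta(x)>0$ by setting the partial derivative to zero yields $\theta(x)^2 = \gamma\, s(x)/r(x)$, where $\gamma$ is the common ratio of the two sums at the optimum and plays the role of the normalizing constant $\alpha$; the objective diverges as any $\theta(x)\to 0$ or $\infty$, so this stationary point is the block minimum. Substituting the message identifications recovers Equation~\ref{eq:dd-update} verbatim, and by Theorem~\ref{thm:dd-update} this same update is the fixed-point condition of pr-dd, establishing the sum-product half of the theorem.

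The main obstacle is the max-sum half, where the block objective $\big(\max_x \theta(x)\,r(x)\big)\big(\max_x s(x)/\theta(x)\big)$ is non-smooth, so ordinary calculus does not apply. Here I would argue instead that the block minimum is attained precisely when the two factors are equalized state-by-state, i.e.\ $\theta(x)\,r(x) = s(x)/\theta(x)$ for every $x$, which forces the geometric mean $\theta(x)=\sqrt{s(x)/r(x)}$ and hence Equation~\ref{eq:dd-update} with $\mpe(\cdot)$ in place of $Z(\cdot)$. The crux is showing that this per-state equalization is genuinely block-optimal for a product (equivalently, a sum of logs) of two maxima: I would show that any $\theta$ leaving the two reparameterized max-marginals unequal admits a local rescaling of $\theta(x)$ that strictly decreases the larger maximum without increasing the other, so equalization is necessary at an optimum. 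Recognizing that this equalization of max-marginals is, by definition, the max-sum diffusion update~\cite{SontagGJ11} then completes the correspondence.
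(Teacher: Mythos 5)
Your overall strategy is sound, and it is genuinely different from the paper's. The paper never solves an optimization problem: it takes the fixed-point characterization of Theorem~\ref{thm:dd-update}, expands the partial derivatives $\partial Z^\prime/\partial\theta_{X_i}(x)$ and $\partial Z^\prime/\partial\theta_{X}(x_i)$ in the fully decomposed model (these are exactly your messages $r(x)$ and $s(x_i)$), passes to log-space, substitutes $\log\theta_Y(y_i)=-\log\theta_{Y_i}(y)$ from the upper-bound condition, and observes that the resulting update is syntactically identical to the max-sum diffusion update as written in \cite[Equation 1.17]{SontagGJ11}; the fact that this update is a block coordinate descent step is inherited from the literature rather than re-proved. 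You instead derive the block minimizer of the dual objective from first principles and show it coincides with Equation~\ref{eq:dd-update}. Your sum-product half is correct, and could even be tightened: by Cauchy--Schwarz, $\bigl(\sum_x\theta(x)r(x)\bigr)\bigl(\sum_x s(x)/\theta(x)\bigr)\ge\bigl(\sum_x\sqrt{r(x)s(x)}\bigr)^2$ with equality exactly at the equalizing $\theta$ up to a global scale, so there the minimizer is unique up to scale and your constant $\gamma$ is precisely the paper's discarded $\alpha$.

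The max-sum half, however, contains a false step, and it is the one you label the crux. Per-state equalization is \emph{sufficient} but \emph{not necessary} for block optimality, so the local-rescaling argument cannot work. Concretely, take binary $X$ with $r(x)=r(\n(x))=1$, $s(x)=4$, $s(\n(x))=1$. The block minimum is $\max_x r(x)s(x)=4$; it is attained by the equalizer $\theta=(2,1)$, but also by $\theta=(2,3/2)$, which leaves the max-marginals at state $\n(x)$ unequal ($3/2$ versus $2/3$). At that second point the objective is locally flat in $\theta(\n(x))$ --- no rescaling of it changes either maximum --- so the claim that any non-equalized $\theta$ admits a strictly improving rescaling is false. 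The repair is that you only need the sufficiency direction, which is elementary: for any $\theta>0$, $\bigl(\max_x\theta(x)r(x)\bigr)\bigl(\max_x s(x)/\theta(x)\bigr)\ge\max_x\bigl[\theta(x)r(x)\cdot s(x)/\theta(x)\bigr]=\max_x r(x)s(x)$, and equality holds at $\theta(x)=\sqrt{s(x)/r(x)}$. Hence the equalizing update attains the block minimum, i.e., it is a legitimate block coordinate descent step; since max-sum diffusion is precisely the algorithm that makes this equalizing choice among the (non-unique) block minimizers, your final identification with Equation~\ref{eq:dd-update} (with $\mpe$ in place of $Z$) goes through, and the theorem --- which asserts correspondence with the max-sum diffusion update, not uniqueness of the block minimizer --- is recovered.
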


This theorem has the following main implication: When computing MPE using RCR with an mpe-dd compensation
scheme, one obtains approximations that correspond precisely to those computed by the dual decomposition
technique of Section~\ref{sec:dual} (assuming a fully decomposed model). 
In particular, the MPE computed using RCR corresponds precisely to one computed
at a fixed-point of a block coordinate descent algorithm such as max-sum diffusion \cite{JohnsonMW07,KomodakisPT07,SontagGJ11}.


We finally point out that the fixed-point iterative algorithm suggested by
Equation~\ref{eq:dd-update} also inherits properties that make block
coordinate descent algorithms so popular, such as monotonic
improvements of the approximation (i.e., MPE value or partition function), 
when equivalence constraints are updated one at a time \cite{GlobersonJ07}.

\section{New Recovery Heuristics for Dual Decomposition} 

Our main result thus far is that the dual decomposition technique for computing MPE corresponds
to an instance of RCR in which (a) enough equivalence constraints are relaxed to yield a fully
decomposed model and (b) the relaxed equivalences are compensated using the mpe-dd condition. 

This, however, corresponds to the degenerate case of RCR. One can obtain much better approximations by recovering
some of the relaxed equivalence constraints, which can be done incrementally and selectively. In the general RCR framework,
this recovery process usually continues until one runs out of time or until the model is too connected to be accessible to exact inference (which
is needed to search for compensations). As we show in the next section, however, this process can actually terminate much earlier,
as we may be able to detect when the computed MPE is exact.

In this section, however, we will focus our attention on two tasks. First, we design heuristics for recovering equivalence constraints
in the context of pr-dd and mpe-dd compensation scheme. Second, we identify a more general update equation than the one
of Theorem~\ref{thm:dd-update}, which, as mentioned earlier, is only applicable in restricted settings. 
Such an update equation
is necessary if we were to search for compensations in a model that is not fully decomposed.

\begin{theorem} \label{thm:dd-update2}
For a single equivalence constraint \(\eqv(X,X_i)\), with binary variables \(X\) and \(X_i\), 
a compensation scheme satisfies pr-dd iff the compensated model satisfies
\begin{equation} \label{eq:dd-update2}
\frac{\theta(x)}{\theta(\n(x))}
= \left( 
\frac{Z(\n(x),x_i)/\theta(\n(x))\theta(x_i)}
     {Z(x,\n(x)_i)/\theta(x)\theta(\n(x)_i)}
\right)^{\frac{1}{2}}
\qquad\qquad
\frac{\theta(x_i)}{\theta(\n(x)_i)}
= \left( 
\frac{Z(x,\n(x)_i)/\theta(x)\theta(\n(x)_i)}
     {Z(\n(x),x_i)/\theta(\n(x))\theta(x_i)}
\right)^{\frac{1}{2}}
\end{equation}
The scheme satisfies mpe-dd iff it satisfies the above condition with \(\mpe(.)\) substituted for \(Z(.)\).
\end{theorem}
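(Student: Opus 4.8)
The plan is to reduce both defining conditions of pr-dd (Definition~\ref{def:cs-dd})---upper-bound and pr-equivalence---to statements about the joint marginal of the single pair \((X,X_i)\) in the compensated model, and then solve for the compensation factors. The central observation is that \(\theta(X)\) and \(\theta(X_i)\) enter the partition function purely multiplicatively, so that
\[
Z(x,x_i) = \theta(x)\theta(x_i)\,\widehat{Z}(x,x_i), \qquad \widehat{Z}(x,x_i) \definedas \frac{Z(x,x_i)}{\theta(x)\theta(x_i)},
\]
where \(\widehat{Z}(x,x_i)\) is precisely the joint marginal of the model with the two factors \(\theta(X),\theta(X_i)\) deleted. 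This identity holds regardless of whether \(X\) and \(X_i\) are independent, which is exactly what lets the resulting equation generalize Theorem~\ref{thm:dd-update}: the quantities \(Z(\cdot)/\theta(\cdot)\theta(\cdot)\) on the right-hand side of Equation~\ref{eq:dd-update2} do not depend on the very factors we are solving for, so the equation remains usable as an update even outside the independent (model-splitting) case.

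First I would rewrite pr-equivalence. Marginalizing \(X_i\) out of the joint gives \(Z(x) = Z(x,x_i)+Z(x,\n(x)_i)\), and marginalizing \(X\) out gives \(Z(x_i) = Z(x,x_i)+Z(\n(x),x_i)\). Since \(\pr(x)=\pr(x_i)\) is the same as \(Z(x)=Z(x_i)\) (common normalizer \(Z\)), cancelling the shared diagonal term \(Z(x,x_i)\) shows that, for binary variables, pr-equivalence is equivalent to equality of the two off-diagonal joint marginals,
\[
Z(x,\n(x)_i) = Z(\n(x),x_i).
\]
Next I would impose the upper-bound condition \(\theta(x)\theta(x_i)=1\), which for binary variables reads \(\theta(x_i)=1/\theta(x)\) and \(\theta(\n(x)_i)=1/\theta(\n(x))\). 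Substituting the pass-through identity into the off-diagonal equality and using these relations collapses it to
\[
\left(\frac{\theta(x)}{\theta(\n(x))}\right)^2 = \frac{\widehat{Z}(\n(x),x_i)}{\widehat{Z}(x,\n(x)_i)} = \frac{Z(\n(x),x_i)/\theta(\n(x))\theta(x_i)}{Z(x,\n(x)_i)/\theta(x)\theta(\n(x)_i)},
\]
and taking the positive square root yields exactly Equation~\ref{eq:dd-update2}; the companion equation for \(\theta(x_i)/\theta(\n(x)_i)\) follows by the same steps. Since every manipulation here is a reversible equivalence, this establishes the ``iff'' for pr-dd.

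For mpe-dd I would rerun the entire argument with \(\mpe(\cdot)=\max\) in place of \(Z(\cdot)=\sum\); the pass-through identity survives verbatim because \(\max\) commutes with multiplication by the positive constant \(\theta(x)\theta(x_i)\). The step I expect to be the main obstacle is the \(\max\) analogue of the off-diagonal characterization. For sums, cancelling the shared diagonal term turns \(Z(x)=Z(x_i)\) cleanly into \(Z(x,\n(x)_i)=Z(\n(x),x_i)\); for \(\max\) no such cancellation is available, since \(\max(P,Q)=\max(P,R)\) need not force \(Q=R\) once the diagonal dominates. The direction I actually need---equality of the off-diagonal max-marginals implies mpe-equivalence---is immediate and suffices to validate Equation~\ref{eq:dd-update2} as a fixed point; the converse, and hence a fully clean ``iff,'' requires ruling out the degenerate case in which an off-diagonal max-marginal is dominated by the diagonal, so I would either restrict to the non-degenerate setting (consistent with the paper's own caveat that these updates apply in restricted configurations) or argue that the update selects the canonical compensation among all those consistent with mpe-equivalence.
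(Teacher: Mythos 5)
Your treatment of the pr-dd half is correct and is essentially the paper's own proof. The paper phrases your ``pass-through'' identity in the equivalent form \(Z^\prime(x,x_i)/\theta(x)\theta(x_i) = \partial^2 Z^\prime/\partial \theta(x)\partial\theta(x_i)\), noting independence from both compensating factors via linearity of \(Z^\prime\) in each; it then expands \(\prp(x)\) and \(\prp(x_i)\) over the two states of the opposite variable, substitutes \(\theta(x_i)=1/\theta(x)\) from upper-bound, equates the marginals, and rearranges---the same diagonal cancellation, off-diagonal equality, and positive square root that you carry out.

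Your reservation about the mpe-dd half is not excess caution: it names a genuine gap, and one the paper itself does not close. The paper's proof handles only the sum case and extends the claim to \(\mpe\) by bare substitution, but the direction ``mpe-dd implies the update equation'' is false without a non-degeneracy assumption, exactly for the reason you give. Concretely, let the relaxed model consist of a single factor \(\phi(X,X_i)\) with \(\phi(x,x_i)=\phi(\n(x),\n(x)_i)=10\), \(\phi(x,\n(x)_i)=1\), \(\phi(\n(x),x_i)=2\) (Theorem~\ref{thm:dd-update2} is precisely the setting where \(X\) and \(X_i\) may remain connected after relaxation), and set all four compensating factors to \(1\). Then upper-bound holds, and \(\mpe^\prime(x)=\mpe^\prime(x_i)=\mpe^\prime(\n(x))=\mpe^\prime(\n(x)_i)=10\), so mpe-equivalence holds as well; yet Equation~\ref{eq:dd-update2} with \(\mpe\) substituted demands \(\theta(x)/\theta(\n(x))=\sqrt{2}\neq 1\). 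So when a diagonal max-marginal dominates both off-diagonal ones, mpe-dd holds while the equation fails; only the direction you call immediate (the equation, under upper-bound, forces the two off-diagonal terms to coincide and hence forces mpe-equivalence) is unconditionally valid. Your proposed repair---stating the \(\max\) case under a non-degeneracy restriction---is the right one, and on this point your write-up is more careful than the paper's own proof.
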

There are two differences between Equation~\ref{eq:dd-update2} and the earlier Equation~\ref{eq:dd-update}.
First, the new equation is applicable even when variables \(X\) and \(X_i\) are not independent in the compensated model.
Hence, we can use this equation to implement a fixed-point iterative algorithm that searches for
compensations in any model.\footnote{In our implementation, we simply set
  \(\theta(\n(x)) = \theta(\n(x)_i) = 1\).}
Second, the new equation is restricted to binary variables as we have yet to derive a version of this for multi-valued
variables.
Similar to Equation~\ref{eq:dd-update}, however, the new equation monotonically improves the approximation, 
when equivalence constraints are updated one at a time.

We now turn our attention to recovery heuristics. Our first observation is as follows: One can efficiently compute
the exact effect of recovering a single equivalence constraint on the quality of an approximation (i.e., partition function
or MPE value). In particular, the
improvement due to recovering a single equivalence constraint can be computed as a side effect of
the fixed-point update by Equation~\ref{eq:dd-update2}.\footnote{The partition function after
recovering a single constraint \(\eqv(X,X_i)\) is \(\frac{Z(x,x_i)}{\theta(x)\theta(x_i)} + \frac{Z(\n(x),\n(x)_i)}{\theta(\n(x))\theta(\n(x)_i)}\)~\cite{ChoiDarwiche08c}.
Moreover, the MPE value after recovering the constraint is \(\max\{\frac{\mpe(x,x_i)}{\theta(x)\theta(x_i)}, \frac{\mpe(\n(x),\n(x)_i)}{\theta(\n(x))\theta(\n(x)_i)}\}.\)}
Thus, our first recovery heuristic imposes no additional overhead as we can compute the exact impact of
recovering each equivalence constraint during the compensation phase.\footnote{Note, however that subsequent fixed-point updates
for other equivalence constraints will in principle invalidate the measured impacts of previous constraints.  On the other hand,
computing this impact requires computations that would allow us to perform an update anyways.}

This first heuristic, however, may not distinguish each equivalence constraint sufficiently (many constraints
may have the same impact upon recovery).  Thus, we propose a secondary recovery heuristic which is specific to mpe-dd and motivated as follows.
Given a current model, suppose that the recovered MPE instantiation is \(\x\) and has value \(m\). In general,
\(m\) is only an upper bound on the exact MPE value as instantiation \(\x\) may violate some
relaxed equivalence constraints, \(\eqv(X,X_i)\)---that is, instantiation \(\x\) may set \(X\) and \(X_i\) to different values. 
However, if instantiation \(\x\) does not violate any of the relaxed equivalence constraints, then \(m\) must
be the exact MPE value. Our secondary recovery heuristic will therefore recover those equivalence 
constraints that are currently violated by the instantiation \(\x\). By recovering such equivalence
constraints, we hope to reduce the number of violated equivalence
constraints in our approximate MPE instantiation, and thus hope to
recover an exact MPE instantiation; cf. reducing the duality gap as in
\cite{SontagMGJW08}. 

Consider, in contrast, the ``recovery'' heuristic suggested by
\cite{SontagMGJW08}, which introduced local consistency
constraints to tighten a linear programming (LP) relaxation that corresponds
to the dual objective of dual decomposition.  This heuristic sought
to tighten an outer bound on the marginal polytope, which would
normally require exponentially many linear constraints in an LP that would exactly solve an MPE problem.  The ``recovery'' heuristic suggested by
\cite{SontagMGJW08}, introduces local consistency constraints
over triplet clusters, which was particularly effective at solving
challenging classes of MPE problems, such as protein design problems
\cite{YanoverMW06}.  However, introducing triplet constraints by
themselves may not be sufficient to completely tighten the dual bound,
and otherwise, there are exponentially many local consistency
constraints available to choose from.  In contrast, the RCR recovery process
yields an incremental and full spectrum of approximations, leading up to exact inference when all equivalence constraints have been recovered.
Thus, we view RCR recovery as a complementary
approach to the techniques of \cite{SontagMGJW08}, when triplet
constraints are not sufficient to extract the exact MPE solution.

\section{An Empirical Perspective}

We evaluate our new recovery heuristics based on their ability to
extract an exact MPE solution for a given probabilistic graphical
model.  In our {\em first set of experiments,} our goal is to illustrate that
RCR can obtain an exact MPE solution by recovering equivalence
constraints, without impacting much the complexity of inference.  For
our second set of experiments, we compared RCR with MPLP in their
ability to find exact MPE solutions based on their respective
approaches to tightening a relaxation, which is by adding triplet
clusters in the case of MPLP \cite{SontagMGJW08}.\footnote{A public
version of MPLP is available at \url{http://cs.nyu.edu/~dsontag/}.
In our {\em second set of experiments,} we used an updated implementation
of MPLP that was provided to us by the authors of~\cite{SontagMGJW08}.}  
Our goal here is to illustrate that recovering equivalence constraints can also be a viable option for
models where introducing triplet clusters alone is not sufficient to
tighten the dual objective of dual decomposition.

For RCR, starting with a fully decomposed model, we
iteratively recover \(5\) equivalence constraints at a time, as
described in the previous section.  For MPLP, we used the default
settings, which introduced 5 triplet clusters at a time.  RCR was set,
as MPLP was, to run for at most 1000 iterations, before recovering
equivalence constraints and introducing triplet clusters.

As the RCR approach requires only a black-box inference engine to
execute its compensation phase (which requires only marginals, or alternatively, partial
derivatives), we can take advantage of state-of-the-art systems for
exact inference.  This includes advanced approaches for inference
based on arithmetic circuits (ACs), which can effectively exploit
local structure
\cite{Chavira.Darwiche.Sat.2006,Chavira.Darwiche.Ijcai.2007}.  We use
such an inference engine for our experiments, although the benchmarks
that we considered do not necessarily have much local structure.
Using arithmetic circuits, we can also more efficiently compute quantities
such as \(\mpe(x,x_i)/\theta(x)\theta(x_i)\) via lazy evaluation in an arithmetic circuit~\cite{ChoiStandleyDarwiche09}.

We first performed experiments on 50 randomly parameterized grid models,
which we generated using MPLP with default parameters, but assuming
binary variables.  The resulting \(10 \times 10\) grids corresponded
to pairwise MRFs with mixed attractive and repulsive couplings.  The
following table summarizes the number of equivalence constraints (out
of 360 relaxed) that needed to be recovered for RCR to obtain an
optimal MPE solution, and the corresponding complexity of inference
(on average).  Note that the complexity of inference using arithmetic
circuits is linear in the size of the AC, i.e., the number of nodes
and edges in the resulting circuit.  

\shrink{
\begin{center}
\begin{tabular}{rrr}
constraints recovered & \% instances & \% increase in AC size\\ \hline
0--90\hphantom{0} &  0\% & ---\\
91--120   &  4\% &  88.11\% \\
121--150  & 16\% &  93.58\% \\
151--180  & 12\% &  89.31\% \\
181--210  & 18\% & 103.17\% \\
211--240  & 24\% & 100.43\% \\
241--270  & 12\% & 113.78\% \\
271--300  &  6\% & 195.41\% \\
301--330  &  8\% & 308.39\% \\
331--360  &  0\% & ---
\end{tabular}
\end{center}
}
{\small
\setlength{\tabcolsep}{3pt}
\begin{center}
\begin{tabular}{r|r|r|r|r|r|r|r|r}
edges recovered     & 91--120 & 121--150 & 151--180 & 181--210 & 211--240 & 241--270 & 271--300 & 301--330 \\\hline
\% instances        &   4\% & 16\% & 12\% & 18\% & 24\% & 12\% & 6\% & 8\%\\
\% increase in AC size & 88.11\% & 93.58\% & 89.31\% & 103.17\% & 100.43\% & 113.78\% & 195.41\% & 308.39\%
\end{tabular}
\end{center}
}

Observe that RCR was able to recover up to \(240\) equivalence
constraints, and solve \(74\%\) of all MPE problems, without
increasing much---even decreasing in many cases---the complexity of
inference.  Note that we start with a fully decomposed
approximation, and it is easily possible to recover many equivalence
constraints without impacting much the treewidth of a model (it is
possible to recover 200 and only obtain a spanning tree).  Moreover,
AC size can decrease since there are fewer compensating factors to
maintain.  MPLP is also effective on this benchmark, where it can
introduce square clusters into its relaxation \cite{SontagMGJW08},
although such a technique is restricted to grids.

We next performed experiments on Bayesian networks induced from
haplotype data (over 201 binary variables), which are networks with
bounded treewidth \cite{ElidanGould:JMLR08}.  These networks do not
necessarily have as regular a structure that can suggest a natural way
of introducing clusters, such as in grids.  Moreover, note that
triplet clusters alone may not be sufficient to tighten the dual
objective, i.e., to close the duality gap.  In these benchmarks, there
were 69 models, of which 13 models were cases where MPLP failed to
find the optimal MPE solution, given 1000 attempts to tighten its
relaxation (i.e., to introduce local consistency constraints).  In
contrast, RCR was able to obtain the optimal MPE solution in all
cases, after recovering a small number of equivalence constraints.

\begin{figure}[tb]
\begin{center}
  \includegraphics[scale=.32,clip=true,angle=0]{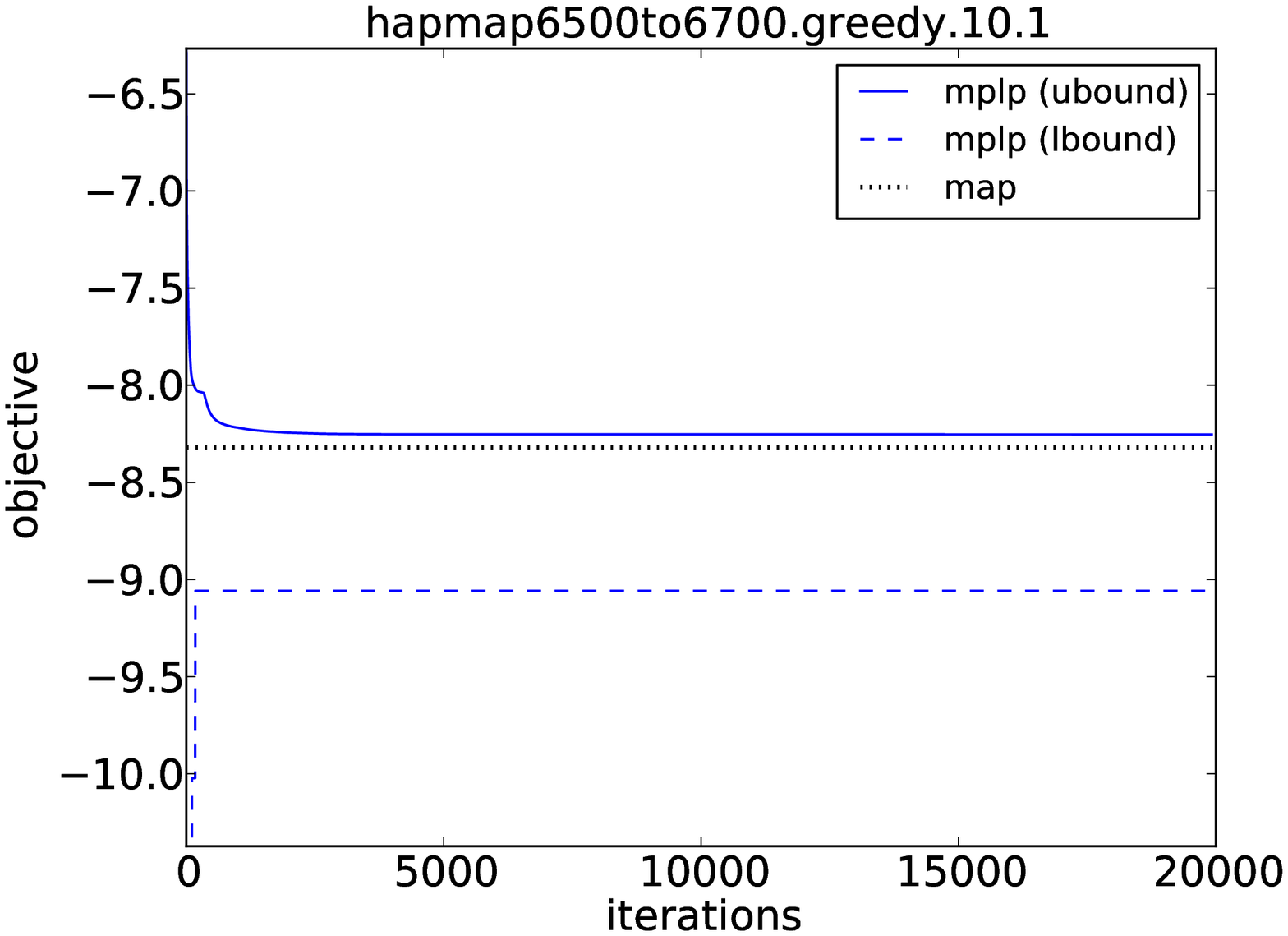}
  \quad
  \includegraphics[scale=.32,clip=true,angle=0]{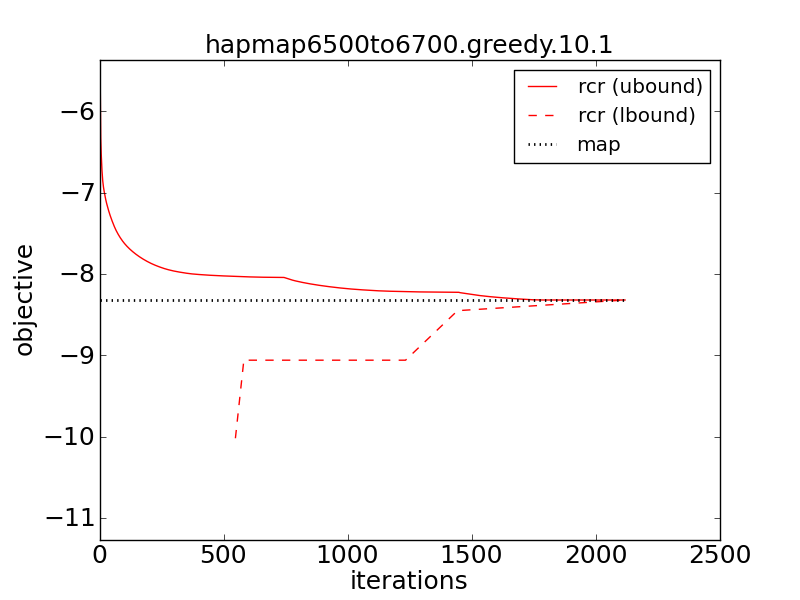}
  \caption{Recovering triplet clusters and equivalence constraints in
    MPLP (left) and RCR (right).  Solid lines indicate the value of
    the dual objective (upper bound), dashed lines indicate the value
    of the current best assignment (lower bound), and the dotted line
    denotes the optimal MPE solution.
    \label{fig:ex1}}
\end{center}
\end{figure}

Figure~\ref{fig:ex1} illustrates an example run of both MPLP and RCR,
in a model where MPLP failed to find an optimal MPE solution.  For the
case of MPLP, one observes that MPLP starts to tighten the gap between
its upper and lower bounds, but fails to tighten it further after some
number of iterations.  In fact, for this particular model, MPLP fails
to find triplet clusters to introduce into its relaxation.  On the
other hand, RCR obtains the optimal solution after recovering only 70
of 451 equivalence constraints.  When we look at the arithmetic
circuits used to do inference in our simplified model, the size
goes down from 38555 to 36729 nodes and edges after
recovering 70 equivalence constraints.

In the following table, we summarized the number of recovered
equivalence constraints needed to obtain an optimal solution, and
the complexity of inference, for the two cases:
\begin{center}
\begin{tabular}{c|c|c|c}
& \# of models & avg. \% recovered & avg. \% increase in AC size \\ \hline
MPLP did not solve  & 13 & 26.93\% & 124.97\% \\
RCR and MPLP solved & 56 & 3.56\% & 99.65\% 
\end{tabular}
\end{center}
In the models that were left unsolved by MPLP, RCR was able to find
exact MPE solutions by recovering only a quarter of the relaxed
equivalence constraints, on average.  This came with only a modest
increase in the complexity of inference, i.e., AC size.  In the models
solved by both MPLP and RCR, very few equivalence constraints needed
to be recovered on average, and in fact led to a very slight decrease
in the complexity of inference.

We finally remark that the second set of experiments involved models
that are not necessarily well suited for recovering triplet clusters
with MPLP.  Moreover, our comparisons with RCR were limited since we
were restricted to models over binary variables (as recovery requires
the use of a compensation algorithm like the one implied by
Theorem~\ref{thm:dd-update2}, which is specific to binary variables).
We plan more thorough empirical comparisons in future work.

\section{Conclusion} \label{sec:conclusion}

In this paper, we formulated the technique of dual decomposition in
the terms of Relax, Compensate and then Recover (RCR).  By formulating
dual decomposition in the more general terms of RCR, we have broadened the
scope of the technique by (a) proposing new recovery heuristics for
tightening the dual objective of dual decomposition, (b) extending it
to other inference tasks, such as bounding the partition function (although this was not evaluated here), 
and (c) formulating it in terms that allows it to easily take advantage of
the vast literature on exact inference, for the purposes of more
effective approximate inference.  Empirically, we showed how these new
recovery heuristics can sometimes be used to obtain exact solutions to MPE
problems, without increasing much the complexity of inference---in particular, on
problems which existing systems based on dual decomposition are not as well suited for.

\subsubsection*{Acknowledgments}

This work has been partially supported by NSF grant \#IIS-1118122.

{\small
\bibliographystyle{unsrt}
\bibliography{bib/dd-rcr,bib/CHANGE,bib/liftedrcr}
}

\input{nips2012-appendix.tex.input}

\end{document}